\newcommand{\data}{x}
\newcommand{\Data}{X}
\newcommand{\amat}{\boldsymbol{A}}
\newcommand{\labelvec}{\boldsymbol{y}}
\newcommand{\labelest}{\boldsymbol{\Tilde{y}}}
\newcommand{\labelestmat}{\boldsymbol{\Tilde{Y}}}
\newcommand{\numdata}{n}
\newcommand{\numweak}{m}
\newcommand{\numclasses}{K}
\newcommand{\weakvec}{\boldsymbol{w}}
\newcommand{\bounds}{\boldsymbol{\epsilon}}
\newcommand{\bound}{\epsilon}
\newcommand{\zvec}{\boldsymbol{z}}
\newcommand{\pvec}{\boldsymbol{p}}
\newcommand{\placeholder}{\boldsymbol{v}}
\newcommand{\wmat}{\boldsymbol{W}}
\newcommand{\umat}{\boldsymbol{U}}
\newcommand{\vmat}{\boldsymbol{V}}
\DeclareMathOperator{\proj}{\mathbb{P}}
\newtheorem{theorem}{Theorem}
\title{Constrained Labeling for Weakly Supervised Learning}
\author {
        Chidubem Arachie\textsuperscript{\rm 1}, Bert Huang\textsuperscript{\rm 2} \\
}
\begin{document}

\maketitle

\begin{abstract}
Curation of large fully supervised datasets has become one of the major roadblocks for machine learning. Weak supervision provides an alternative to supervised learning by training with cheap, noisy, and possibly correlated labeling functions from varying sources. The key challenge in weakly supervised learning is combining the different weak supervision signals while navigating misleading correlations in their errors. In this paper, we propose a simple data-free approach for combining weak supervision signals by defining a constrained space for the possible labels of the weak signals and training with a random labeling within this constrained space. Our method is efficient and stable, converging after a few iterations of gradient descent. We prove theoretical conditions under which the worst-case error of the randomized label decreases with the rank of the linear constraints. 
We show experimentally that our method outperforms other weak supervision methods on various text- and image-classification tasks.
\end{abstract}

\section{Introduction}

Recent successful demonstrations of machine learning have created an explosion of interest. The key driver of these successes is the progress in deep learning. Researchers in different fields and industries are applying deep learning to their work with varying degrees of success. Training deep learning models typically requires massive amounts of data, and in most cases this data needs to be labeled for supervised learning. The process of collecting labels for large training datasets is often expensive and can be a major bottleneck for practical machine learning.

To enable machine learning when labeled data is not available, researchers are increasingly turning to weak supervision. Weakly supervised learning involves training models using noisy labels. Using multiple sources or forms of weak supervision is common, as it provides diverse information to the model. However, each source of weak supervision has its own bias that can be transmitted to the model. Different weak supervision signals can also conflict, overlap, or---in the worst case---make dependent errors. Thus, a naive combination of these weak signals would hurt the quality of a learned model. The key problem then is how to reliably combine various sources of weak signals to train an accurate model.

To solve this problem, we propose \emph{constrained label learning} (CLL), a method that processes various weak supervision signals and combines them to produce high-quality training labels. The idea behind CLL is that, given the weak supervision, we can define a constrained space for the labels of the unlabeled examples. The space will contain the true labels of the data, and any other label sampled from the space should be sufficient to train a model. We construct this space using the expected error of the weak supervision signals, and then we select a random vector from this space to use as training labels. Our analysis shows that, the space of labels considered by CLL improves to be tighter around the true labels as we include more information in the weak signals and that CLL is not confounded by redundant weak signals.

CLL takes as input (1) a set of unlabeled data examples, (2) multiple weak supervision signals that label a subset of data and can abstain from labeling the rest, and (3) a corresponding set of expected error rates for the weak supervision signals. While the weak supervision signals can abstain on various examples, we require that the combination of the weak signals have full coverage on the training data. The expected error rates can be estimated if the weak supervision signals have been tested on historical data or a domain expert has knowledge about their performance. In cases where the expected error rates are unavailable, they can be treated as a hyperparameter. 
Our experiments in \cref{sec:bounds} show that CLL is still effective when it is trained with a loose estimate of the weak signals.
Alternatively, we provide guidelines on how error rates can be estimated. 

We implement CLL as a stable, quickly converging, convex optimization over the candidate labels. CLL thus scales much better than many other weak supervision methods.
We show in \cref{sec:experiments} experiments that compare the performance of CLL to other weak supervision methods. On a synthetic dataset, CLL trained with a constant error rate is only a few percentage points from matching the performance of supervised learning on a test set. On real text and image classification tasks, CLL achieves superior performance over existing weak supervision methods on test data.

\section{Related Work}
\label{related}

Weakly supervised learning has gained prominence in recent years due to the need to train models without access to manually labeled data. The recent success of deep learning has exacerbated the need for large-scale data annotation, which can be prohibitively expensive. One weakly supervised paradigm, data programming, allows users to define \emph{labeling functions} that noisily label a set of unlabeled data \cite{bach2018snorkel,ratner2017snorkel,ratner2016data}. Data programming then combines the noisy labels to form probabilistic labels for the data by using a generative model to estimate the accuracies and dependencies of the noisy/weak supervision signals. This approach underlies the popular software package \emph{Snorkel} \cite{ratner2017snorkel}. Our method is related to this approach in that we use different weak signal sources and compile them into a single (soft) labeling. However, unlike Snorkel's methods, we do not train a generative model and avoid the need for probabilistic modeling assumptions. Recently, Snorkel MeTaL was proposed for solving multi-task learning problems with hierarchical structure \cite{ratner2018snorkel}. A user provides weak supervision for the hierarchy of tasks which is then combined in an end-to-end framework.

Another recently developed approach for weakly supervised learning is adversarial label learning (ALL) \cite{arachie2019adversarial}. ALL was developed for training binary classifiers from weak supervision. ALL trains a model to perform well in the worst case for the weak supervision by simultaneously optimizing model parameters and adversarial labels for the training data in order to satisfy the constraint that the error of the weak signals on the adversarial labels be within provided error bounds. The authors also recently proposed Stoch-GALL \cite{arachie2019adaptable}, an extension for multi-class classification that incorporates precision bounds. Our work is related to ALL and Stoch-GALL in that we use the same error definition the authors introduced. However, the expected errors we use do not serve as upper bound constraints for the weak signals. Additionally, CLL avoids the adversarial setting that requires unstable simultaneous optimization of the estimated labels and the model parameters.
Lastly, while ALL and Stoch-GALL require weak supervision signals to label every example, we allow for weak supervision signals that abstain on different data subsets. 

Crowdsourcing has become relevant to machine learning practitioners as it provide a means to train machine learning models using labels collected from different crowd workers \cite{carpenter2008multilevel,gao2011harnessing, karger2011iterative,khetan2017learning,liu2012variational,platanios2020learning,zhou2015regularized,zhou2016crowdsourcing}. The key machine learning challenge when crowdsourcing is to effectively combine the different labels obtained from human annotators. Our work is similar in that we try to combine different weak labels. However, unlike most methods for crowdsourcing, we cannot assume that the labels are independent of each other. Instead, we train the model to learn while accounting for dependencies between the various weak supervision signals. 

Ensemble methods such as boosting \cite{schapire2002incorporating} combine different weak learners (low-cost, low-powered classifiers) to create classifiers that outperform the various weak learners. These weak learners are not weak in the same sense as weak supervision. These strategies are defined for fully supervised settings. Although recent work has proposed leveraging unlabeled data to improve the accuracies of boosting methods \cite{balsubramani2015scalable}, our settings differs since we do not expect to have access to labeled data.

A growing set of weakly supervised applications includes web knowledge extraction ~\cite{bunescu:acl07,hoffman:acl11,mintz:acl09,riedel:ecml10,yao:emnlp10}, visual image segmentation~\cite{chen:cvpr14,xu:cvpr14}, and tagging of medical conditions from health records~\cite{halpern:health16}. As better weakly supervised methods are developed, this set will expand to include other important applications.

We will show an estimation method that is connected to those developed to estimate the error of classifiers without labeled data \cite{dawid1979maximum,jaffe2016unsupervised,madani2005co,platanios2014estimating, platanios2016estimating,steinhardt2016unsupervised}. These methods rely on statistical relationships between the error rates of different classifiers or weak signals. Unlike these methods, we show in our experiments that we can train models even when we do not learn the error rates of classifiers. We show that using a maximum error estimate of the weak signals, CLL learns to accurately classify.

Like our approach, many other methods incorporate human knowledge or side information into a learning objective. These methods, including posterior regularization \cite{druck2008learning} and generalized expectation (GE) criteria and its variants \cite{mann2008generalized,mann2010generalized}, can be used for semi- and weakly supervised learning. They work by providing parameter estimates as constraints to the objective function of the model so that the label distribution of the trained model tries to match the constraints. In our approach, we incorporate human knowledge as error estimates into our algorithm. However, we do not use the constraints for model training. Instead, we use them to generate training labels that satisfy the constraints, and these labels can then be used downstream to train any model.

\section{Constrained Label Learning}
\label{sec:method}

The goal of \emph{constrained label learning} (CLL) is to return accurate training labels for the data given the weak supervision signals. The estimation of these labels should be aware of the correlation among the weak supervision signals and should not be confounded by it. Toward this goal, we use the weak signals' expected error to define a constrained space of possible labelings for the data. Any vector sampled from this space can then be used as training labels. We consider the setting in which the learner has access to a training set of unlabeled examples, and a set of weak supervision signals from various sources that provide approximate indicators of the target classification for the data. Along with the weak supervision signals, we are provided estimates of the expected error rates of the weak signals. Formally, let the data be $\Data = [\data_1, \ldots, \data_\numdata ]$. These examples have corresponding labels $\labelvec = [y_1, \ldots, y_\numdata] \in \{0, 1\}^\numdata$. For multi-label classification, where each example may be labeled as a member of $\numclasses$ classes, we expand the label vector to include an entry for each example-class combination, i.e., $\labelvec = [y_{(1, 1)}, \ldots, y_{(n, 1)}, y_{(1, 2)}, \ldots, y_{(\numdata - 1, \numclasses)}, y_{(\numdata, \numclasses)}]$, where $y_{ij}$ is the indicator of whether the $i$th example is in class $j$.\footnote{We represent the labels as a vector for later notational convenience, even though it may be more naturally arranged as a matrix.} See \cref{fig:diagram} for an illustration of this arrangement.

\begin{figure}[tb]
\centering
\includegraphics[width=0.95\columnwidth]{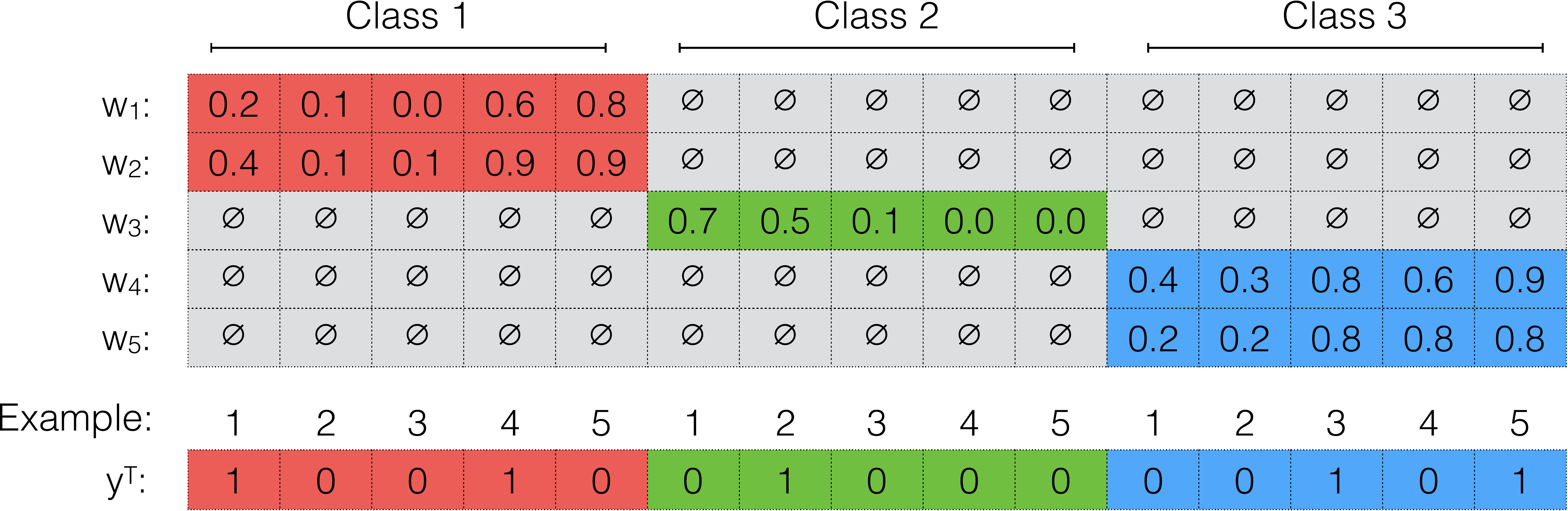}
\caption{\footnotesize Illustration of weak signals and label vectorized structure. For multi-class problems, we arrange the label vector so that it contains indicators for each example belonging to each class. The weak signals use the same indexing scheme. In this illustration, weak signals $\weakvec_1$ and $\weakvec_2$ estimate the probability of each example belonging to class 1 and abstain on estimating membership in all other classes.}
\label{fig:diagram}
\end{figure}

With weak supervision, the training labels $\labelvec$ are unavailable. Instead, we have access to $\numweak$ weak supervision signals $\{\weakvec_1, \ldots, \weakvec_\numweak\}$, where each weak signal $\weakvec \in [\emptyset, 0, 1]^n$ is represented as a vector of indicators that each example is in each class. The weak signals can choose to abstain on some examples. In that case, they assign a null value $\emptyset$ to that example's entry.  In practice, weak signals for multi-class problems typically only label one class at a time, such as a one-versus-rest classification rule, so they effectively abstain on all out-of-class entries. The weak signals can be soft labels (probabilities) or hard labels (class assignments) of the data. In conjunction with the weak signals, the learner also receives the expected error rates of the weak signals $\bounds = [\bound_1, \ldots, \bound_\numweak]$. In practice, the error rates of the weak signals are estimated or treated as a hyperparameter. The expected empirical error of a weak signal $\weakvec_i$ is
\begin{equation}
\begin{aligned}
\bound_i &= \frac{1}{n_i} \left( \mathbf{1}_{(\weakvec \neq \emptyset)} \weakvec_i^\top (1 - \labelvec_k) + \mathbf{1}_{(\weakvec \neq \emptyset)} (1 - \weakvec_i)^\top \labelvec_k \right)\\
&= \frac{1}{n_i} \left( \mathbf{1}_{(\weakvec \neq \emptyset)}(1 - 2 \weakvec_i)^\top \labelvec_k + \weakvec_i^\top \mathbf{1}_{(\weakvec \neq \emptyset)} \right),
\end{aligned}
\label{eq:error}
\end{equation}
where $\labelvec_k$ is the true label for the class $k$ that the weak signal $\weakvec_i$\ labels, $n_i = \sum \mathbf{1}_{(\weakvec_i \neq \emptyset)}$ and $\mathbf{1}_{(\weakvec_i \neq \emptyset)}$ is an indicator function that returns $1$ on examples the weak signals label (i.e., do not abstain on). Hence, we only calculate the error of the weak signals on the examples they label.

Analogously to \cref{eq:error}, we can express the expected error of all weak signals for the label vector as a system of linear equations in the form $\amat \labelvec = \boldsymbol{c}$. To do this, we define each row in $\amat$ as 
\begin{equation}
    \amat_i = \mathbf{1}_{(\weakvec_i \neq \emptyset)}(1 - 2 \weakvec_i),
    \label{eq:linearsystemA}
\end{equation}
a linear transformation of a weak signal $\weakvec$. Each entry in the vector $\boldsymbol{c}$ is the difference between the expected error of the weak signal and the sum of the weak signal, i.e., 
\begin{equation}
    \boldsymbol{c_i} = \numdata_i \bound_i - \weakvec_i^\top \mathbf{1}_{(\weakvec \neq \emptyset)}.
    \label{eq:linearsystemc}
\end{equation}
Valid label vectors then must be in the space
\begin{equation}
\begin{aligned}
\{\labelest | \amat \labelest = \boldsymbol{c} \wedge \labelest \in [0,1]^\numdata \} ~.
\end{aligned}
\label{eq:labelspace}
\end{equation}

The true label $\labelvec$ is not known. Thus, we want to find training labels $\labelest$ that satisfy the system of linear equations.

\subsection{Algorithm}

Having defined the space of possible labelings for the data given the weak signals, we explain here how we efficiently sample a vector of training labels from the space. First, we initialize a random $\labelest$ from a uniform distribution $\labelestmat \sim U(0,1)^\numdata$. Then we minimize a quadratic penalty on violations of the constraints defining the space. The objective function is
\begin{equation}
\begin{aligned}
\min_{\labelest \in [0, 1]^\numdata} ~~ \left\lVert \amat \labelest - \boldsymbol{c} \right\rVert_2^2 ~. 
\end{aligned}
\label{eq:mainobjective}
\end{equation}

The solution to this quadratic objective function gives us feasible labels for the training data. In our experiments, we estimate the error rates $\bounds$ of the weak signals. In cases where the error estimates make an infeasible space, this quadratic penalty acts as a squared slack. We solve \cref{eq:mainobjective} iteratively using projected Adagrad \cite{duchi2011adaptive}, clipping $\labelest$ values to $[0,1]^n$ between gradient updates. This approach is fast and efficient, even for large datasets. Our algorithm is a simple quadratic convex optimization that converges to a unique optimum for each initialization of $\labelest$. In our experiments, it converges after only a few iterations of gradient descent. We run the algorithm 3 times with random initialization of $\labelest$ and take the mean of the $\labelest$s as the estimated label. We observed that the labels returned from the different runs are very similar. We fix the number of iterations of gradient descent for each run to $200$ for all our experiments. The full algorithm is summarized in \cref{alg:CLL}.

\begin{algorithm}[tb]
\footnotesize
\begin{algorithmic}[1]
\REQUIRE Dataset $\Data = [\data_1, \ldots, \data_\numdata]$, weak signals $[\weakvec_1, \ldots, \weakvec_\numweak]$, and expected error $\bounds = [\bound_1, \ldots, \bound_\numweak]$ for the signals.
\STATE Define $\amat$ from \cref{eq:linearsystemA} and $\boldsymbol{c}$ from \cref{eq:linearsystemc} using the weak signals and expected errors.
\STATE Initialize $\labelest$ as $\labelest \sim U(0,1)^{\numdata}$
\WHILE{not converged}
\STATE Update $\labelest$ with its gradient from \cref{eq:mainobjective}
\STATE Clip $\labelest$ to $[0,1]^n$
\ENDWHILE \\
return estimated labels $\labelest$
\end{algorithmic}
\caption{Randomized Constrained Labeling}
\label{alg:CLL}
\end{algorithm}

\subsection{Analysis}
\label{sec:theory}

We start by analyzing the case where we have the true error $\bounds$, in which case the true label vector $\labelvec$ for CLL is a solution in the feasible space. Although the true error rates are not available in practice, this ideal setting is the motivating case for the CLL approach. To begin the analysis, consider an extreme case: if $\amat$ is a square matrix with full rank, then the only valid label $\labelest$ in the space is the true label, $\labelest = \labelvec$. Normally, $\amat$ is usually underdetermined, which means we have more data examples than weak signals. In this case, there are many solutions for $\labelest$, so we can analyze this space to understand how distant any feasible vector is from the vector of all incorrect labels. Since label vectors are constrained to be in the unit box, the farthest possible label vector from the true labels is $(1 - \labelvec)$. The result of our analysis is the following theorem, which addresses the binary classification case with non-abstaining weak signals.

\begin{theorem}
For any $\labelest \in [0, 1]^n$ such that $\amat \labelest = \boldsymbol{c}$, its Euclidean distance from the negated label vector $(1 - \labelvec) \in \{0, 1\}^\numdata$ is bounded below by
\begin{equation}
    ||\labelest - (1 - \labelvec)|| \ge \numdata ||\amat^+ (1 - 2 \bounds)||,
\end{equation}
where $\amat^+$ is the Moore-Penrose pseudoinverse of $\amat$.
\label{thm:distance}
\end{theorem}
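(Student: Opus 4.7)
The plan is to reduce the statement to a clean linear-algebraic fact about minimum-norm preimages under $\amat$. The key change of variables is to set $\zvec \coloneqq \labelest - (1 - \labelvec)$ and show that $\zvec$ must satisfy an affine equation whose right-hand side depends only on $\bounds$. Once we have this, the classical bound $\|\zvec\| \geq \|\amat^+ b\|$ for any solution of $\amat \zvec = b$ with $b \in \operatorname{range}(\amat)$ delivers the theorem immediately.

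First I would observe that, by hypothesis, $\amat \labelest = \boldsymbol{c}$, and that since $\bounds$ is the true expected error vector, the true label $\labelvec$ itself lies in the feasible space, so $\amat \labelvec = \boldsymbol{c}$ as well. Subtracting and rearranging,
\begin{equation*}
\amat \zvec = \amat \labelest - \amat(1 - \labelvec) = \boldsymbol{c} - \amat \mathbf{1} + \amat \labelvec = 2 \boldsymbol{c} - \amat \mathbf{1}.
\end{equation*}
Now I would compute the right-hand side entrywise. Using $\amat_i = (1 - 2\weakvec_i)$ in the non-abstaining binary case, we have $(\amat \mathbf{1})_i = \numdata - 2 \weakvec_i^\top \mathbf{1}$, while $\boldsymbol{c}_i = \numdata \bound_i - \weakvec_i^\top \mathbf{1}$ from \cref{eq:linearsystemc}. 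Substituting, the $\weakvec_i^\top \mathbf{1}$ terms cancel and one obtains $2\boldsymbol{c}_i - (\amat \mathbf{1})_i = 2\numdata\bound_i - \numdata = -\numdata(1 - 2\bound_i)$. Hence
\begin{equation*}
\amat \zvec = -\numdata(1 - 2\bounds).
\end{equation*}

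For the final step I would invoke the standard minimum-norm property of the Moore--Penrose pseudoinverse: since the right-hand side lies in $\operatorname{range}(\amat)$ (it equals $\amat \zvec$), every solution decomposes as $\zvec = \amat^+ (-\numdata(1 - 2\bounds)) + \boldsymbol{v}$ with $\boldsymbol{v} \in \ker(\amat)$, and the two summands are orthogonal because $\amat^+ b$ lies in the row space of $\amat$. Therefore $\|\zvec\|^2 \geq \|\amat^+ (-\numdata(1 - 2\bounds))\|^2 = \numdata^2 \|\amat^+(1 - 2\bounds)\|^2$, and taking square roots gives exactly the bound in the theorem.

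The only real obstacle is the bookkeeping in the second step: one must keep the definitions of $\amat$ and $\boldsymbol{c}$ straight so that the $\weakvec_i^\top \mathbf{1}$ contributions cancel and the answer collapses to something expressible purely in terms of $\bounds$. After that, the argument is a one-line application of the pseudoinverse minimum-norm property, so no further technical machinery is needed.
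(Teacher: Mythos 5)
Your proof is correct and follows essentially the same route as the paper's: both arguments reduce the theorem to the orthogonal decomposition of the displacement vector into the row space and null space of $\amat$, and both perform the same cancellation using $\amat\labelvec = \boldsymbol{c} = \numdata\bounds - \wmat\vec{1}_\numdata$. The only cosmetic differences are your change of variables $\zvec = \labelest - (1-\labelvec)$ (versus the paper's $\zvec = \labelest - \labelvec$, which turns the constraint into $\amat\zvec = 0$ and frames the problem as projecting $(1-2\labelvec)$ onto the null space) and your direct appeal to the minimum-norm property of $\amat^+$ in place of the paper's explicit row-space projection $\amat^+\amat(1-2\labelvec)$; these coincide because $\amat\zvec = -\amat(1-2\labelvec)$, so the two computations are identical up to sign.
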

\begin{proof}

We first relax the constrained space by removing the $[0, 1]^\numdata$ box constraints. We can then analyze the projection onto the feasible space:
\begin{equation}
\min_{\labelest} ||(1 - \labelvec) - \labelest|| ~~ \textrm{s.t.} ~~ \amat \labelest = \boldsymbol{c}.
\end{equation}
Define a vector $\zvec := \labelest - \labelvec$. We can rewrite the distance as
\begin{equation}
\min_{\zvec} ||(1 - 2\labelvec) - \zvec|| ~~ \textrm{s.t.} ~~ \amat \zvec = 0.
\end{equation}
The minimization is a projection of $(1 - 2y)$ onto the null space of $\amat$. Since the null and row spaces of a matrix are complementary, $(1 - 2 \labelvec)$ decomposes into
\[
(1 - 2\labelvec) = \proj_\textrm{row} (1 - 2 \labelvec) + \proj_\textrm{null}(1 - 2 \labelvec),
\]
where $\proj_\textrm{row}$ and $\proj_\textrm{null}$ are orthogonal projections into the row and null spaces of $\amat$, respectively. We can use this decomposition to rewrite the distance of interest:
\begin{equation}
    \begin{aligned}
        &||(1 - 2\labelvec) - \proj_\textrm{null}(1 - 2 \labelvec)||\\
        &= ||(1 - 2\labelvec) - ((1 - 2 \labelvec) -  \proj_\textrm{row}(1 - 2 \labelvec))||\\
        &= || \proj_\textrm{row}(1 - 2 \labelvec)||.
    \end{aligned}
\end{equation}
For any vector $\placeholder$, its projection into the row space of matrix $\amat$ is $\amat^+ \amat \placeholder$, where $\amat^+$ is the Moore-Penrose pseudoinverse of $\amat$. The distance of interest is thus $||\amat^+ \amat (1 - 2\labelvec)||$. We can use the definition of $\amat$ to further simplify. Let $\wmat$ be the matrix of weak signals $\wmat = [\weakvec_1, \ldots, \weakvec_\numweak]^\top$. Then the distance is
\begin{equation}
    \begin{aligned}
        &||A^+ (1 - 2\wmat) (1 - 2 \labelvec)||\\
        &= ||A^+ ((1 - 2\wmat) \vec{1}_n - 2 (1 - 2\wmat)   \labelvec)||\\
        &= ||A^+ (\numdata - 2\wmat \vec{1}_n - 2 \amat   \labelvec)||.
    \end{aligned}  
\end{equation}
Because $\amat \labelvec = \boldsymbol{c} = \numdata \bounds - \wmat \vec{1}_n$, terms cancel, yielding the bound in the theorem:
\begin{equation}
    \begin{aligned}
    &||A^+ (\numdata - 2\wmat \vec{1}_n - 2 \numdata \bounds + 2\wmat \vec{1}_n)||\\
    &=||A^+ (\numdata - 2 \numdata \bounds)|| = n||A^+ (1 - 2 \bounds)||.
    \end{aligned}  
\end{equation}
\end{proof}

This bound provides a quantity that is computable in practice. However, to gain an intuition about what factors affect its value, the distance formula can be further analyzed by using the singular-value decomposition (SVD) formula for the pseudoinverse. Consider SVD $\amat = \umat \Sigma \vmat^\top$. Then $\amat^+ = \vmat \Sigma^+ \umat^\top$, where the pseudoinverse $\Sigma^+$ contains the reciprocal of all nonzero singular values along the diagonal (and zeros elsewhere). The distance simplifies to
\begin{equation}
    \begin{aligned}
    n||\vmat \Sigma^+ \umat^\top(1- 2 \bounds)|| = n||\Sigma^+ \umat^\top(1 - 2 \bounds)||,
    \end{aligned}  
\end{equation}
since $\vmat$ is orthonormal. Furthermore, let $\pvec = \umat^\top(1 - 2 \bounds)$, i.e., $\pvec$ is a rotation of the centered error rates of the weak signals with the same norm as $(1 - 2\bounds)$. From this change of variables, we can decompose the distance into
\begin{equation}
    \begin{aligned}
    n||\Sigma^+ \pvec|| = n\sqrt{\sigma_1^2 p_1^2~ + \ldots +~ \sigma_m^2 p_m^2},
    \end{aligned}
    \label{eq:simplified}
\end{equation}
where $\sigma_j$ is the $j$th singular value of $\amat^+$.

As this distance grows toward $\sqrt{n}$, the space of possible labelings shrinks toward zero, at which point the only feasible label vectors are close to the true labels $\labelvec$.
\Cref{eq:simplified} indicates that the distance increases roughly as the rank of $\amat$ increases, in which case the number of non-zero singular values in $\Sigma^+$ increases, irrespective of how many actual weak signals are given. Thus, redundancy in the weak supervision does not affect the performance of CLL. The other key factor in the distance is how far from 0.5 the errors $\bounds$ are. These quantities can be interpreted as the diversity and number of the weak signals (corresponding to the rank) and their accuracies (the magnitude of $\pvec$).

Though the analysis is for length-$\numdata$ label vectors, it is straightforwardly extended to multi-label settings with length-$(\numdata \numclasses)$. And with careful indexing and tracking of the abstaining indicators, the same form of analysis can apply for abstaining weak signals.

\Cref{fig:rank} shows an empirical validation of \cref{thm:distance} on a synthetic experiment. 
We plot the error of the labels returned by CLL and majority voting as we change the rank of $\amat$.  We use a synthetic data for a binary classification task with 100 randomly generated examples containing 20 binary features. The weak signals are random binary predictions for the labels where each weak signal error rate is calculated using the true labels of the data. We start with 100 redundant weak signals by generating a matrix $\amat$ whose 100 columns contain copies of the same weak signal, giving it a rank of 1. We then iteratively increase the rank of $\amat$ by replacing copies of the weak signal with random vectors from the uniform distribution. The error of CLL labels approaches zero as the rank of the matrix increases while the majority vote error does not improve significantly.

\begin{figure}[tb]
\centering
\includegraphics[width=0.5\textwidth]{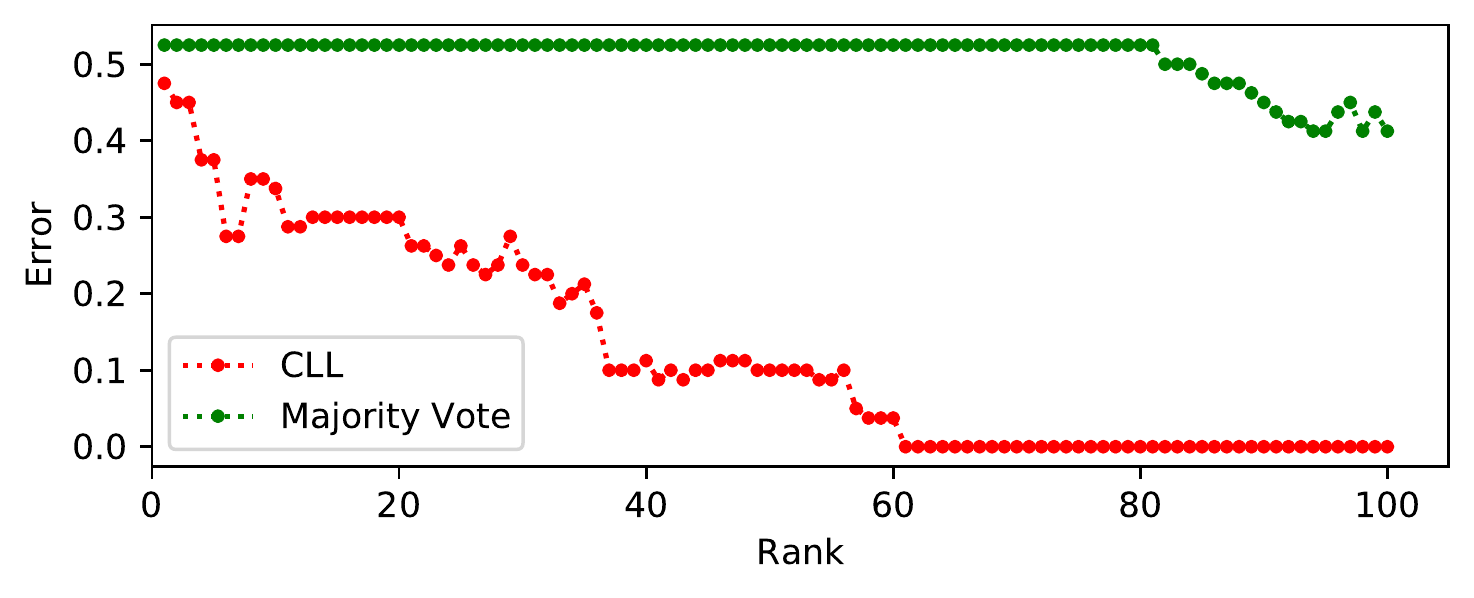}
\caption{\footnotesize Error of CLL estimated labels compared to majority vote as we increase the rank of $\amat$ by replacing redundant weak signals with linearly independent weak signals.}
\label{fig:rank}
\end{figure}

\subsection{Error Estimation}
\label{sec:bounds}

In our analysis, we assume that the expected error rates of the weak signals are available. This may be the case if the weak signals have been evaluated on historical data or if an expert provides the error rates. In practice, users typically define weak supervision signals whose error rates are unknown.
In this section, we discuss two approaches to handle such situations. We test these estimation techniques on real and synthetic data in our experiments, finding that CLL with these strategies forms a powerful weakly supervised approach.

\subsubsection{Agreement Rate Method}
\label{sec:agreement}

Estimating the error rates of binary classifiers using their agreement rates was first proposed by \citet{platanios2014estimating}. They propose two different objective functions for solving the error rates of classifiers using their agreement rates as constraints. Similar to MeTaL \cite{ratner2018snorkel}, we solve a matrix-completion problem to find a low-rank factorization for the weak signal accuracies. We assume that if the weak signals are conditionally independent, we can relate the disagreement rates to the weak signal accuracies. We implemented this method and report its performance in our synthetic experiment (see \cref{sec:experiments}). The one-vs-all form of the weak signals on our real datasets violates the assumption that each weak signal makes prediction on all the classes, so we cannot use the agreement rate method on our real data.

\subsubsection{Uniform Error Rate}
The idea of using uniform error rates of the weak signals was first proposed in ALL \cite{arachie2019adversarial}. Their experiments showed that ALL can learn as effectively as when using true error rates by using a constant for the error rates of all the weak signals on their binary classification datasets. We use this approach in our experiments and extend it to weak supervision signals that abstain and also on multi-class datasets. \Cref{fig:errorbounds} plots the accuracy of generated labels as we increase the error-rate parameter. On the binary-class SST-2 dataset, the label accuracy remains similar if the error rate is set between 0 and 0.5 and drops for values at least $0.5$. On the multiclass Fashion-MNIST data, we notice similar behavior where the label accuracies are similar between 0.05 and 0.1 and drop with larger values. We surmise that this behavior mirrors the type of weak supervision signals we use in our experiments. The weak signals in our real experiments are one-vs-all signals; hence a baseline signal (guessing 0 on all examples) will have an error rate of $\frac{1}{\numclasses}$. Performance deteriorates when the error rate is worse than this baseline rate.

\begin{figure}[tb]
    \centering
    \begin{subfigure}[b]{0.9\columnwidth}
    \centering
        \includegraphics[width=0.7\columnwidth]{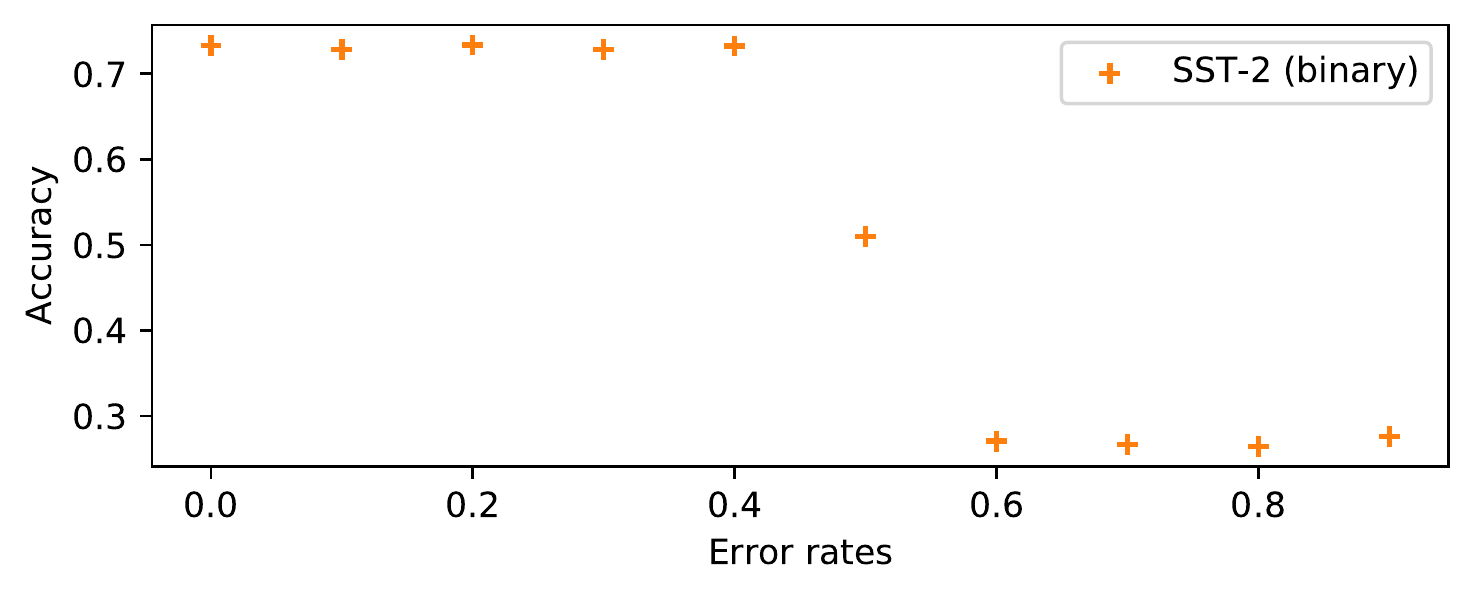}
        \label{fig:bc_error}
    \end{subfigure}
    ~ 
    \begin{subfigure}[b]{0.9\columnwidth}
    \centering
        \includegraphics[width=0.7\columnwidth]{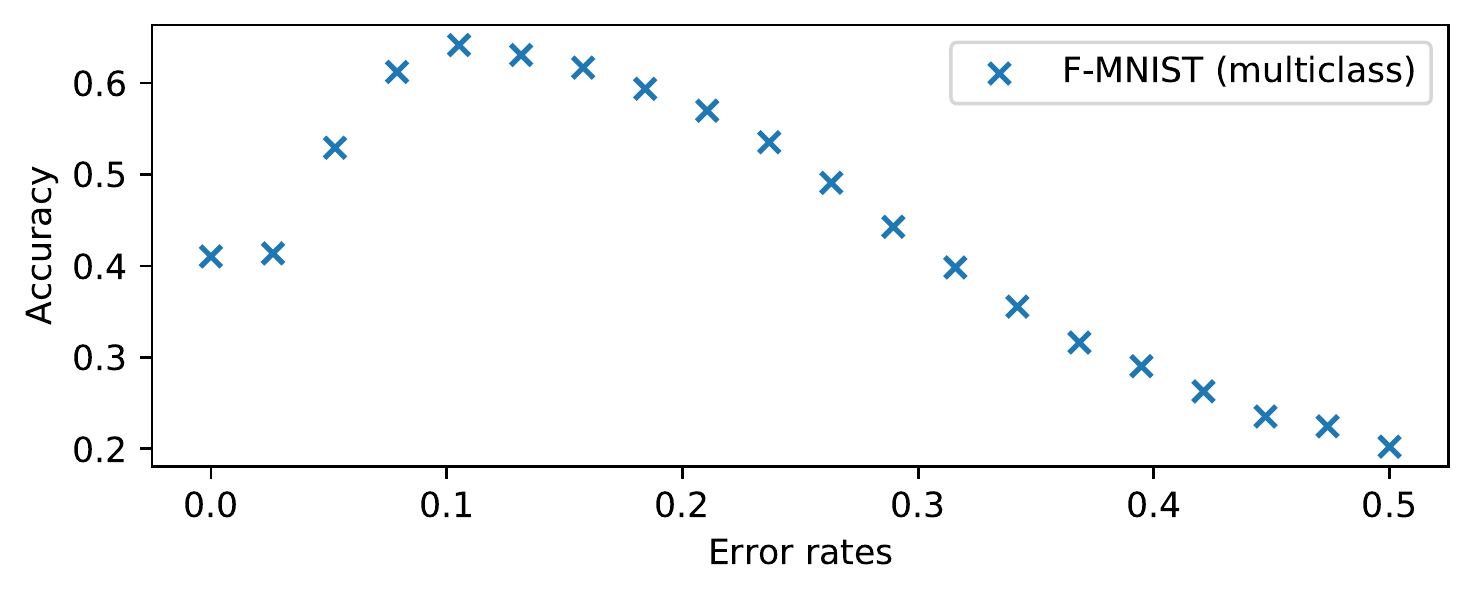}
        \label{fig:multi_error}
    \end{subfigure}
    \caption{\footnotesize Accuracy of constrained label learning as we increase the error rates from 0 to 1 on binary and 0 to 0.5 on multiclass datasets (SST-2 and Fashion-MNIST).}
\label{fig:errorbounds}
\end{figure}

\section{Experiments}
\label{sec:experiments}

We test constrained label learning on a variety of tasks on text and image classification. First, we measure the test accuracy of CLL on a synthetic dataset and compare its performance to that of supervised learning and other baselines. Second, we validate our approach on real datasets. 

For all our experiments, we compare CLL to other weakly supervised methods: data programming (DP) \cite{ratner2016data} and majority-vote (MV) or averaging (AVG). Additionally, on our real datasets we show comparison to regularized minimax conditional entropy for crowdsourcing (MMCE) \cite{zhou2015regularized}. For reference, we include the performance of supervised learning baseline. On the image datasets, we show comparison of CLL to Stoch-GALL, a multiclass extension of adversarial label learning. It is worth noting that DP was developed for binary classification, thus to compare its performance on our multiclass datasets, we run DP on the weak signals that label each class in the datasets. All the weak signals on the real datasets are one-vs-all signals meaning they only label a single class and abstain on other classes.

\subsection{Synthetic Experiment}

\begin{table}[tb]
\footnotesize
\centering
\begin{tabular}[t]{llc}
\toprule
Method & Test Accuracy \\
\midrule
CLL (Agr. rate $\bounds$) & \textbf{0.668}$\pm$ 0.005 \\
CLL (Constant $\bounds$)     & 0.630$\pm$ 0.009 \\
Data Programming  & 0.504$\pm$ 0.000\\
Majority Vote     & 0.504$\pm$ 0.000 \\\\
CLL (True $\bounds$)    & 0.675 $\pm$ 0.024 \\
Supervised Learning    & 0.997$\pm$ 0.001 \\
\bottomrule
\end{tabular}
\caption{\footnotesize Classification accuracies of the different methods on synthetic data using dependent weak signals. We report the mean and standard deviation over three trials.}
\label{tab:dependent}
\end{table}

\begin{table}[tb]
\footnotesize
\centering
\begin{tabular}[t]{llc}
\toprule
Method & Test Accuracy \\
\midrule
CLL (Agr. rate $\bounds$) & \textbf{0.984}$\pm$ 0.003 \\
CLL (Constant $\bounds$)     & 0.978$\pm$ 0.004 \\
Data Programming  & 0.978$\pm$ 0.003\\
Majority Vote     & 0.925$\pm$ 0.009 \\\\
CLL (True $\bounds$)    & 0.985$\pm$ 0.0004 \\
Supervised Learning    & 0.997$\pm$ 0.001 \\
\bottomrule
\end{tabular}
\caption{\footnotesize Classification accuracies of the different methods on synthetic data using independent weak signals. We report the mean and standard deviation over three trials}
\label{tab:independent}
\end{table}

\begin{table*}[tb]
\centering
\begin{tabular}{lcccc}
\toprule
Datasets & CLL & MMCE & DP  & MV \\
\midrule
IMDB  & \textbf{0.736}$\pm$ 0.0005 &0.573 & 0.693  & 0.702 \\
SST-2 & \textbf{0.678}$\pm$ 0.0004 &0.677 & 0.666 & 0.666 \\
YELP-2  & 0.765$\pm$ 0.0002 &0.685 & 0.770 & \textbf{0.775} \\
TREC-6 & 0.842$\pm$ 0.004 &0.833 & \textbf{0.898} & 0.273 \\
\bottomrule
\end{tabular}
\caption{\footnotesize Label accuracies of CLL compared to other weak supervision methods on different text classification datasets. We report the mean and standard deviation over three trials. CLL is trained using $\bounds$ = 0.01 on the text classification datasets.
}
\label{tab:label_accuracies}
\end{table*}

\begin{table*}[tb]
\centering
\begin{tabular}{lcccc|c}
\toprule
Datasets & CLL & MMCE & DP & MV & Supervised \\
\midrule
IMDB  & \textbf{0.740}$\pm$ 0.005 &0.551 & 0.623$\pm$ 0.007 & 0.724$\pm$0.004 & 0.820$\pm$0.003 \\
SST-2 & \textbf{0.729}$\pm$ 0.001 &0.727 & 0.720$\pm$ 0.001  & 0.720$\pm$ 0.0009 & 0.792$\pm$ 0.001 \\
YELP-2   & \textbf{0.840}$\pm$ 0.0007 &0.68 & 0.760$\pm$ 0.005 & 0.798$\pm$ 0.007 & 0.879$\pm$ 0.001 \\
TREC-6 & \textbf{0.641}$\pm$ 0.022 &0.64 & 0.627$\pm$ 0.014 & 0.605$\pm$ 0.006 & 0.700$\pm$ 0.024 \\
\bottomrule
\end{tabular}
\vspace{5pt}
\caption{\footnotesize Test accuracies of CLL compared to other weak supervision methods on different text classification datasets. We report the mean and standard deviation over three trials. CLL is trained using $\bounds$ = 0.01 on the text classification datasets }
\label{tab:test_accuracies}
\end{table*}

\begin{table*}[tb]
\centering
\begin{tabular}{lccccc}
\toprule
Datasets & CLL & MMCE & DP & AVG & Stoch-GALL \\
\midrule
SVHN   & \textbf{0.575}$\pm$ 0.001 & 0.1 & 0.42  & 0.444 & 0.196$\pm$ 0.025\\
Fashion-MNIST   & \textbf{0.658}$\pm$ 0.001 &0.147 & 0.65 & 0.649 & 0.488$\pm$ 0.002 \\
\bottomrule
\end{tabular}
\caption{\footnotesize Label accuracies of CLL compared to other weak supervision methods on image datasets. We report the mean and standard deviation over three trials. 
CLL is trained using $\bounds$ = $\frac{1}{\numclasses}$ on the datasets and it outperforms other baseline approaches.}
\label{tab:image_label_accuracies}
\end{table*}

\begin{table*}[t]
\centering
\begin{tabular}{lccccc|c}
\toprule
Datasets & CLL & MMCE & DP & AVG & Stoch-GALL & Supervised \\
\midrule
SVHN   & \textbf{0.670}$\pm$ 0.031 & 0.1 & 0.265$\pm$ 0.004 & 0.432$\pm$ 0.001 & 0.366$\pm$ 0.003 & 0.851$\pm$ 0.002\\
Fashion-MNIST  & \textbf{0.695}$\pm$ 0.002 &0.151 & 0.635$\pm$ 0.0004  & 0.666$\pm$ 0.002 & 0.598$\pm$ 0.002 & 0.852$\pm$ 0.003\\
\bottomrule
\end{tabular}
\caption{\footnotesize Test accuracies of CLL compared to other weak supervision methods on image datasets. We report the mean and standard deviation over three trials.  CLL is trained using $\bounds$ = $\frac{1}{\numclasses}$ on the datasets.}
\label{tab:image_test_accuracies}
\end{table*}

We construct a toy dataset for a binary classification task where the data has 200 randomly generated binary features and 20,000 examples, 16,000 for training and 4,000 for testing. Each feature vector has between 50\% to 70\% correlation with the true label. We define two scenarios for our synthetic experiments. We run the methods using (1) dependent weak signals and, (2) independent weak signals. In both experiments, we use $10$ weak signals that have at most $30\%$ coverage on the data and conflicts on their label assignments. The dependent weak signals were constructed by generating one weak signal that is copied noisily 9 times (randomly flipping $20\%$ of the labels). The original weak signal labeled $30\%$ of the data points and had an accuracy in $[0.5,0.6]$. So, on average, we expect to perturb $6\%$ of its labels on the copies. The independent weak signals are randomly generated to have accuracies in the range $[0.6,0.7]$.

We report in \cref{tab:dependent} and \cref{tab:independent}  the label and test accuracy from running CLL using true error rates for the weak signals, error rates estimated via agreement rate described in \cref{sec:agreement}, and error rates using a maximum error rate constant set to 0.4 as the expected error for all the weak signals. CLL trained using the true $\bounds$ obtains the highest test accuracy compared to the other baselines, and its performance almost matches that of supervised learning in \cref{tab:independent}. With the true bounds, CLL slightly outperforms CLL trained using estimated and constant $\bounds$. More interestingly, the results in \cref{tab:dependent} show that our method outperforms other baselines that are strongly affected by the dependence in the weak signals. The generative model of data programming assumes that the weak signals are independent given the true labels, but this is not the case in this setup as the weak signals are strongly dependent. Thus the conditional independence violation hurts its performance and essentially reduces it to performing a majority vote on the labels.

Since our evaluation in \cref{fig:errorbounds} demonstrated that CLL is not very sensitive to the choice of error rate, we set the error rates $\bounds$ = 0.01 on the text datasets and $\bounds$ = $\frac{1}{\numclasses}$ on the image datasets. We choose these values because our weak signals in the text dataset tend to label few examples and have low error rates thus we prefer not to under-constrain the optimization by using high error rates values for the one-vs-all weak-signals. In contrast, our human labeled weak signals on the image datasets have high error rates hence we set the error rate value to the baseline value for one-vs-all signals. 

\subsection{Real Experiments}

\begin{table}[tb]
\centering
\resizebox{\columnwidth}{!}{
\begin{tabular}{lccrr}
\toprule
Dataset & No. classes & No. weak signals & Train Size & Test Size \\
\midrule
IMDB  & 2 & 10 & 29,182 & 20,392 \\
SST-2 & 2 & 14 & 3,998 & 1,821 \\
YELP-2 & 2 & 14 & 45,370 & 10,000 \\
TREC-6 & 6 & 18 & 4,988 & 500 \\
SVHN   & 10 & 50 & 73,257 & 26,032 \\
Fashion-MNIST   & 10 & 50 & 60,000 & 10,000 \\
\bottomrule
\end{tabular}}
\caption{Summary of datasets, including the number of weak signals used for training.}
\label{tab:datasets}
\end{table}

The data sets for our real experiments and their weak signal generation process are described below. \Cref{tab:datasets} summarizes the key statistics about these datasets. Our code and datasets are provided here.\footnote{ \url{https://github.com/VTCSML/Constrained-Labeling-for-Weakly-Supervised-Learning}}

\noindent\textbf{IMDB} ~~
The IMDB dataset \cite{maas2011learning} is used for sentiment analysis. The data contains reviews of different movies, and the task is to classify user reviews as either positive or negative in sentiment. We provide weak supervision by measuring mentions of specific words in the movie reviews. We created a set of positive words that weakly indicate positive sentiment and negative words that weakly indicate negative sentiment. We chose these keywords by looking at samples of the reviews and selecting popular words used in them. Many reviews could contain both positive and negative keywords, and in these cases, the weak signals will conflict on their labels.  We split the dataset into training and testing subsets, where any example that contains one of our keywords is placed in the training set.
Thus, \emph{the test set consists of reviews that are not labeled by any weak signal}, making it important for the weakly supervised learning to generalize beyond the weak signals.
The dataset contains 50,000 reviews, of which 29,182 are used for training and 20,392 are test examples. 

\noindent\textbf{SST-2} ~~
The Stanford Sentiment Treebank (SST-2) is another sentiment analysis dataset \cite{socher2013recursive} containing movie reviews. Like the IMDB dataset, the goal is to classify reviews from users as having either positive or negative sentiment. We use similar keyword-based weak supervision but with different keywords. We use the standard train-test split provided by the original dataset. While the original training data contained 6,920 reviews, our weak signals only cover 3,998 examples. Thus, we used the reduced data size to train our model. We use the full test set of 1,821 reviews.

\noindent\textbf{YELP-2} ~~
We used the Yelp review dataset containing user reviews of businesses from the Yelp Dataset Challenge in 2015. Like the IMDB and SST-2 dataset, the goal is to classify reviews from users as having either positive or negative sentiment. We converted the star ratings in the dataset by considering reviews above 3 stars rating as positive and negative otherwise. We used similar weak supervision generating process as in SST-2. We sampled 50,000 reviews for training and 10,000 for testing from the original data set. Our weak signals only cover 45,370 data points, thus, we used the reduced data size to train our model.

\noindent\textbf{TREC-6} ~~
TREC is a question classification dataset consisting of fact-based questions divided into different categories \cite{li2002learning}. The task is to classify questions to predict what category the question belongs to. We use the six-class version (TREC-6) from which we use 4,988 examples for training and 500 for testing. The weak supervision we use combines word mentions with other heuristics we defined to analyze patterns of the question and assign a class label based on certain patterns.

\noindent\textbf{SVHN} ~~
The Street View House Numbers (SVHN) \cite{netzer2018street} dataset represents the task of recognizing digits on real images of house numbers taken by Google Street View. Each image is a $32 \times 32$ RGB vector. The dataset has 10 classes and has 73,257 training images and 26,032 test images. We define 50 weak signals for this dataset. For this image classification dataset, we augment 40 other human-annotated weak signals (four per class) with ten pseudolabel predictions of each class from a model trained on 1\% of the training data. The human-annotated weak signals are nearest-neighbor classifiers where a human annotator is asked to mark distinguishing features about an exemplar image belonging to a specific class.  We then calculate pairwise Euclidean distances between the pixels in the marked region across images. We convert the Euclidean scores to probabilities (soft labels for the examples) via a logistic transform. Through this process, an annotator is guiding the design of a simple one-versus-rest classifier, where images most similar to the reference image are more likely to belong to its class.

\noindent\textbf{Fashion-MNIST} ~~
The Fashion-MNIST dataset \cite{xiao2017fashion} represents the task of recognizing articles of clothing where each example is a $28 \times 28$ grayscale image. The images are categorized into 10 classes of clothing types where each class contains 6,000 training examples and 1,000 test examples. We used the same format of weak supervision signals as in the SVHN dataset (pseudolabels and human-annotated nearest-neighbor classifiers).

\noindent\textbf{Models} ~~ For the text analysis tasks, we use 300-dimensional GloVe vectors \cite{pennington2014glove} as features for the text classification tasks. Then we train a simple two-layer neural network with 512 hidden units and ReLU activation in its hidden layer. The model for the image classification tasks is a six-layer convolutional neural network model with a 3$\times$3 filter and 32 channels at each layer. We use a sigmoid function as the output layer for both models in our experiment. Thus we train using binary cross-entropy loss with the soft labels returned by CLL, which represent the probability of examples belonging to classes. 

\noindent\textbf{Results} ~~
\Cref{tab:label_accuracies,tab:test_accuracies} list the performance of the various weakly supervised methods on text classification datasets, while \cref{tab:image_label_accuracies,tab:image_test_accuracies} list the performance of various weakly supervised methods on image classification datasets. Considering both types of accuracy, CLL is able to output labels for the training data that train high-quality models for the test set. CLL outperforms all competing methods on test accuracy on the datasets. Interestingly, on Yelp and Trec-6 datasets, CLL label accuracy is lower than that of competing baselines yet CLL still achieves superior test accuracy. We surmise that CLL label accuracy is lower than competing methods on some datasets because of the inaccuracy in the error estimates. Generally, CLL is able to learn robust labels from the weak signals, and it seems to pass this information to the learning algorithm to help it generalize on unseen examples. For example, on the IMDB dataset, we used keyword-based weak signals that only occur on the training data. The model trained using CLL labels performs better on the test set than models trained with labels learned from data programming or majority vote. CLL outperforms all competing methods on the image classification tasks. On the digit recognition task (SVHN), CLL outperforms the best compared method (average) by over $13$ percentage points for the label accuracy and $23$ percentage points on the test data. CLL is able to better synthesize information from the low-quality human-annotated signals combined with the higher-quality pseudolabel signals.

\section{Conclusion}
We introduced constrained label learning (CLL), a weakly supervised learning method that combines different weak supervision signals to produce probabilistic training labels for the data. 
CLL defines a constrained space for the labels of the training data by requiring that the errors of the weak signals agree with the provided error estimates. 
CLL is fast and converges after a few iterations of gradient descent. Our theoretical analysis shows that the accuracy of our estimated labels increases as we add more linearly independent weak signals. 
This analysis is consistent with the intuition that the constrained-space interpretation of weak supervision avoids overcounting evidence when multiple redundant weak signals provide the same information, since they are linearly dependent.
Our experiments compare CLL against other weak supervision approaches on different text and image classification tasks. The results demonstrate that CLL outperforms these methods on most tasks. Interestingly, we are able to perform well when we train CLL using a worst case uniform error estimate for the weak signals. This shows that CLL is robust and not too sensitive to inaccuracy in the error estimates. In future work, we aim to theoretically analyze the behavior of this approach in such settings where the error rates are unreliable, with the hope that theoretical understanding will suggest new approaches that are even more robust.

\section*{Acknowledgments}
Arachie and Huang were both supported by a grant from the U.S. Department of Transportation, University Transportation Centers Program to the Safety through Disruption University Transportation Center (69A3551747115).

\bibliography{aaai}

\begin{thebibliography}{40}
\providecommand{\natexlab}[1]{#1}
\providecommand{\url}[1]{\texttt{#1}}
\providecommand{\urlprefix}{URL }
\expandafter\ifx\csname urlstyle\endcsname\relax
  \providecommand{\doi}[1]{doi:\discretionary{}{}{}#1}\else
  \providecommand{\doi}{doi:\discretionary{}{}{}\begingroup
  \urlstyle{rm}\Url}\fi

\bibitem[{Arachie and Huang(2019{\natexlab{a}})}]{arachie2019adversarial}
Arachie, C.; and Huang, B. 2019{\natexlab{a}}.
\newblock Adversarial Label Learning.
\newblock In \emph{Proc. of the AAAI Conf. on Artif. Intelligence}, 3183--3190.

\bibitem[{Arachie and Huang(2019{\natexlab{b}})}]{arachie2019adaptable}
Arachie, C.; and Huang, B. 2019{\natexlab{b}}.
\newblock Stochastic Generalized Adversarial Label Learning.
\newblock \emph{arXiv preprint arXiv:1906.00512} .

\bibitem[{Bach et~al.(2019)Bach, Rodriguez, Liu, Luo, Shao, Xia, Sen, Ratner,
  Hancock, and Alborzi}]{bach2018snorkel}
Bach, S.~H.; Rodriguez, D.; Liu, Y.; Luo, C.; Shao, H.; Xia, C.; Sen, S.;
  Ratner, A.; Hancock, B.; and Alborzi, H. 2019.
\newblock Snorkel {D}ry{B}ell: A case study in deploying weak supervision at
  industrial scale.
\newblock In \emph{Intl. Conf. on Manag. of Data}, 362--375.

\bibitem[{Balsubramani and Freund(2015)}]{balsubramani2015scalable}
Balsubramani, A.; and Freund, Y. 2015.
\newblock Scalable semi-supervised aggregation of classifiers.
\newblock In \emph{Advances in Neural Information Processing Systems},
  1351--1359.

\bibitem[{Bunescu and Mooney(2007)}]{bunescu:acl07}
Bunescu, R.~C.; and Mooney, R. 2007.
\newblock Learning to extract relations from the web using minimal supervision.
\newblock In \emph{Proceedings of the Annual Meeting of the Association for
  Computational Linguistics}, volume~45, 576--583.

\bibitem[{Carpenter(2008)}]{carpenter2008multilevel}
Carpenter, B. 2008.
\newblock Multilevel {B}ayesian models of categorical data annotation.
\newblock \emph{Unpublished manuscript} 17(122): 45--50.

\bibitem[{Chen et~al.(2014)Chen, Fidler, Yuille, and Urtasun}]{chen:cvpr14}
Chen, L.-C.; Fidler, S.; Yuille, A.~L.; and Urtasun, R. 2014.
\newblock Beat the mturkers: Automatic image labeling from weak 3d supervision.
\newblock In \emph{Proc. of the IEEE Conf. on Comp. Vis. and Pattern
  Recognition}, 3198--3205.

\bibitem[{Dawid and Skene(1979)}]{dawid1979maximum}
Dawid, A.~P.; and Skene, A.~M. 1979.
\newblock Maximum likelihood estimation of observer error-rates using the {EM}
  algorithm.
\newblock \emph{Applied Statistics} 20--28.

\bibitem[{Druck, Mann, and McCallum(2008)}]{druck2008learning}
Druck, G.; Mann, G.; and McCallum, A. 2008.
\newblock Learning from labeled features using generalized expectation
  criteria.
\newblock In \emph{Proceedings of the 31st Annual Intl. ACM SIGIR Conf. on
  Research and Dev. in Information Retrieval}, 595--602.

\bibitem[{Duchi, Hazan, and Singer(2011)}]{duchi2011adaptive}
Duchi, J.; Hazan, E.; and Singer, Y. 2011.
\newblock Adaptive subgradient methods for online learning and stochastic
  optimization.
\newblock \emph{Journal of Machine Learning Research} 12(Jul): 2121--2159.

\bibitem[{Gao, Barbier, and Goolsby(2011)}]{gao2011harnessing}
Gao, H.; Barbier, G.; and Goolsby, R. 2011.
\newblock Harnessing the crowdsourcing power of social media for disaster
  relief.
\newblock \emph{IEEE Intelligent Systems} 26(3): 10--14.

\bibitem[{Halpern, Horng, and Sontag(2016)}]{halpern:health16}
Halpern, Y.; Horng, S.; and Sontag, D. 2016.
\newblock Clinical Tagging with Joint Probabilistic Models.
\newblock In \emph{Proceedings of the Conference on Machine Learning for
  Healthcare}, 209--225.

\bibitem[{Hoffmann et~al.(2011)Hoffmann, Zhang, Ling, Zettlemoyer, and
  Weld}]{hoffman:acl11}
Hoffmann, R.; Zhang, C.; Ling, X.; Zettlemoyer, L.; and Weld, D.~S. 2011.
\newblock Knowledge-based weak supervision for information extraction of
  overlapping relations.
\newblock In \emph{Proc. of the Annual Meeting of the Assoc. for Comp.
  Linguistics: Human Language Tech.}, 541--550.

\bibitem[{Jaffe et~al.(2016)Jaffe, Fetaya, Nadler, Jiang, and
  Kluger}]{jaffe2016unsupervised}
Jaffe, A.; Fetaya, E.; Nadler, B.; Jiang, T.; and Kluger, Y. 2016.
\newblock Unsupervised ensemble learning with dependent classifiers.
\newblock In \emph{Artificial Intelligence and Statistics}, 351--360.

\bibitem[{Karger, Oh, and Shah(2011)}]{karger2011iterative}
Karger, D.~R.; Oh, S.; and Shah, D. 2011.
\newblock Iterative learning for reliable crowdsourcing systems.
\newblock In \emph{Advances in Neural Information Processing Systems},
  1953--1961.

\bibitem[{Khetan, Lipton, and Anandkumar(2017)}]{khetan2017learning}
Khetan, A.; Lipton, Z.~C.; and Anandkumar, A. 2017.
\newblock Learning from noisy singly-labeled data.
\newblock \emph{arXiv preprint arXiv:1712.04577} .

\bibitem[{Li and Roth(2002)}]{li2002learning}
Li, X.; and Roth, D. 2002.
\newblock Learning question classifiers.
\newblock In \emph{Proceedings of the 19th International Conference on
  Computational Linguistics-Volume 1}, 1--7. Association for Computational
  Linguistics.

\bibitem[{Liu, Peng, and Ihler(2012)}]{liu2012variational}
Liu, Q.; Peng, J.; and Ihler, A.~T. 2012.
\newblock Variational inference for crowdsourcing.
\newblock In \emph{Advances in Neural Information Processing Systems},
  692--700.

\bibitem[{Maas et~al.(2011)Maas, Daly, Pham, Huang, Ng, and
  Potts}]{maas2011learning}
Maas, A.~L.; Daly, R.~E.; Pham, P.~T.; Huang, D.; Ng, A.~Y.; and Potts, C.
  2011.
\newblock Learning word vectors for sentiment analysis.
\newblock In \emph{Proceedings of the 49th Annual Meeting of the Association
  for Computational Linguistics: Human Language Technologies-Volume 1},
  142--150. Association for Computational Linguistics.

\bibitem[{Madani, Pennock, and Flake(2005)}]{madani2005co}
Madani, O.; Pennock, D.~M.; and Flake, G.~W. 2005.
\newblock Co-validation: Using model disagreement on unlabeled data to validate
  classification algorithms.
\newblock In \emph{Advances in Neural Information Processing Systems},
  873--880.

\bibitem[{Mann and McCallum(2008)}]{mann2008generalized}
Mann, G.~S.; and McCallum, A. 2008.
\newblock Generalized expectation criteria for semi-supervised learning of
  conditional random fields.
\newblock \emph{Proc. of the Annual Meeting of the Assoc. for Comp.
  Linguistics: Human Language Tech.} 870--878.

\bibitem[{Mann and McCallum(2010)}]{mann2010generalized}
Mann, G.~S.; and McCallum, A. 2010.
\newblock Generalized expectation criteria for semi-supervised learning with
  weakly labeled data.
\newblock \emph{Journal of Machine Learning Research} 11: 955--984.

\bibitem[{Mintz et~al.(2009)Mintz, Bills, Snow, and Jurafsky}]{mintz:acl09}
Mintz, M.; Bills, S.; Snow, R.; and Jurafsky, D. 2009.
\newblock Distant supervision for relation extraction without labeled data.
\newblock In \emph{Proc. of the Annual Meeting of the Assoc. for Comp. Ling.}

\bibitem[{Netzer et~al.(2018)Netzer, Wang, Coates, Bissacco, Wu, and
  Ng}]{netzer2018street}
Netzer, Y.; Wang, T.; Coates, A.; Bissacco, A.; Wu, B.; and Ng, A. 2018.
\newblock The Street View House Numbers ({SVHN}) Dataset.
\newblock Technical report, Accessed 2016-08-01.[Online].

\bibitem[{Pennington, Socher, and Manning(2014)}]{pennington2014glove}
Pennington, J.; Socher, R.; and Manning, C. 2014.
\newblock Glo{V}e: {G}lobal vectors for word representation.
\newblock In \emph{Proceedings of the 2014 Conference on Empirical Methods in
  Natural Language Processing (EMNLP)}, 1532--1543. Doha, Qatar.

\bibitem[{Platanios et~al.(2020)Platanios, Al-Shedivat, Xing, and
  Mitchell}]{platanios2020learning}
Platanios, E.~A.; Al-Shedivat, M.; Xing, E.; and Mitchell, T. 2020.
\newblock Learning from Imperfect Annotations.
\newblock \emph{arXiv preprint arXiv:2004.03473} .

\bibitem[{Platanios, Blum, and Mitchell(2014)}]{platanios2014estimating}
Platanios, E.~A.; Blum, A.; and Mitchell, T. 2014.
\newblock Estimating accuracy from unlabeled data.
\newblock In \emph{Proceedings of the Thirtieth Conf. on Uncertainty in
  Artificial Intelligence}, 682--691.

\bibitem[{Platanios, Dubey, and Mitchell(2016)}]{platanios2016estimating}
Platanios, E.~A.; Dubey, A.; and Mitchell, T. 2016.
\newblock Estimating accuracy from unlabeled data: A {B}ayesian approach.
\newblock In \emph{International Conference on Machine Learning}, 1416--1425.

\bibitem[{Ratner et~al.(2018)Ratner, Hancock, Dunnmon, Goldman, and
  R{\'e}}]{ratner2018snorkel}
Ratner, A.; Hancock, B.; Dunnmon, J.; Goldman, R.; and R{\'e}, C. 2018.
\newblock Snorkel metal: Weak supervision for multi-task learning.
\newblock In \emph{Proceedings of the Second Workshop on Data Management for
  End-To-End Machine Learning}, 1--4.

\bibitem[{Ratner et~al.(2017)Ratner, Bach, Ehrenberg, and
  R{\'e}}]{ratner2017snorkel}
Ratner, A.~J.; Bach, S.~H.; Ehrenberg, H.~R.; and R{\'e}, C. 2017.
\newblock Snorkel: Fast training set generation for information extraction.
\newblock In \emph{Proceedings of the 2017 ACM Intl. Conf. on Management of
  Data}, 1683--1686. ACM.

\bibitem[{Ratner et~al.(2016)Ratner, De~Sa, Wu, Selsam, and
  R{\'e}}]{ratner2016data}
Ratner, A.~J.; De~Sa, C.~M.; Wu, S.; Selsam, D.; and R{\'e}, C. 2016.
\newblock Data programming: Creating large training sets, quickly.
\newblock In \emph{Advances in Neural Info. Proc. Sys.}, 3567--3575.

\bibitem[{Riedel, Yao, and McCallum(2010)}]{riedel:ecml10}
Riedel, S.; Yao, L.; and McCallum, A. 2010.
\newblock Modeling relations and their mentions without labeled text.
\newblock In \emph{Joint Euro. Conf. on Mach. Learn. and Knowledge Disc. in
  Databases}.

\bibitem[{Schapire et~al.(2002)Schapire, Rochery, Rahim, and
  Gupta}]{schapire2002incorporating}
Schapire, R.~E.; Rochery, M.; Rahim, M.; and Gupta, N. 2002.
\newblock Incorporating prior knowledge into boosting.
\newblock In \emph{Intl. Conf. on Machine Learning}, volume~2, 538--545.

\bibitem[{Socher et~al.(2013)Socher, Perelygin, Wu, Chuang, Manning, Ng, and
  Potts}]{socher2013recursive}
Socher, R.; Perelygin, A.; Wu, J.; Chuang, J.; Manning, C.~D.; Ng, A.~Y.; and
  Potts, C. 2013.
\newblock Recursive deep models for semantic compositionality over a sentiment
  treebank.
\newblock In \emph{Proceedings of the 2013 Conference on Empirical Methods in
  Natural Language Processing}, 1631--1642.

\bibitem[{Steinhardt and Liang(2016)}]{steinhardt2016unsupervised}
Steinhardt, J.; and Liang, P.~S. 2016.
\newblock Unsupervised risk estimation using only conditional independence
  structure.
\newblock In \emph{Adv. in Neural Information Processing Systems}, 3657--3665.

\bibitem[{Xiao, Rasul, and Vollgraf(2017)}]{xiao2017fashion}
Xiao, H.; Rasul, K.; and Vollgraf, R. 2017.
\newblock Fashion-{MNIST}: a novel image dataset for benchmarking machine
  learning algorithms.
\newblock \emph{arXiv preprint arXiv:1708.07747} .

\bibitem[{Xu, Schwing, and Urtasun(2014)}]{xu:cvpr14}
Xu, J.; Schwing, A.~G.; and Urtasun, R. 2014.
\newblock Tell me what you see and {I} will show you where it is.
\newblock In \emph{Proc. of the IEEE Conf. on Computer Vis. and Pattern
  Recog.}, 3190--3197.

\bibitem[{Yao, Riedel, and McCallum(2010)}]{yao:emnlp10}
Yao, L.; Riedel, S.; and McCallum, A. 2010.
\newblock Collective cross-document relation extraction without labelled data.
\newblock In \emph{Proc. of the Conf. on Empirical Methods in Natural Language
  Processing}, 1013--1023.

\bibitem[{Zhou et~al.(2015)Zhou, Liu, Platt, Meek, and
  Shah}]{zhou2015regularized}
Zhou, D.; Liu, Q.; Platt, J.~C.; Meek, C.; and Shah, N.~B. 2015.
\newblock Regularized minimax conditional entropy for crowdsourcing.
\newblock \emph{arXiv preprint arXiv:1503.07240} .

\bibitem[{Zhou and He(2016)}]{zhou2016crowdsourcing}
Zhou, Y.; and He, J. 2016.
\newblock Crowdsourcing via Tensor Augmentation and Completion.
\newblock In \emph{International Joint Conference on Artificial Intelligence},
  2435--2441.

\end{thebibliography}

\end{document}